\definecolor{DarkGreen}{rgb}{0.1,0.5,0.1}
\definecolor{DarkRed}{rgb}{0.5,0.1,0.1}
\definecolor{DarkBlue}{rgb}{0.1,0.1,0.5}
\def\draft{1}
\def\submit{0}
    \def\ShowAuthNotes{1}
    \def\ShowAuthNotes{0}
\newcommand{\forsubmit}[1]{#1}
\newcommand{\forreals}[1]{}
\newcommand{\forreals}[1]{#1}
\newcommand{\forsubmit}[1]{}
\newcommand{\authnote}[2]{{ \footnotesize \bf{\color{DarkRed}[#1's Note:
{\color{DarkBlue}#2}]}}}
\newcommand{\authnote}[2]{}
\newtheorem{theorem}{Theorem}[section]
\newtheorem{corollary}[theorem]{Corollary}
\newtheorem{claim}[theorem]{Claim}
\theoremstyle{definition}
\newtheorem{definition}[theorem]{Definition}
\newcommand{\chapterref}[1]{\hyperref[ch:#1]{Chapter~\ref{ch:#1}}}
\newcommand{\claimref}[1]{\hyperref[claim:#1]{Claim~\ref{claim:#1}}}
\newcommand{\corollaryref}[1]{\hyperref[cor:#1]{Corollary~\ref{cor:#1}}}
\newcommand{\definitionref}[1]{\hyperref[def:#1]{Definition~\ref{def:#1}}}
\newcommand{\equationlabel}[1]{\label{eq:#1}}
\newcommand{\equationref}[1]{\hyperref[eq:#1]{Equation~\ref{eq:#1}}}
\newcommand{\factref}[1]{\hyperref[fact:#1]{Fact~\ref{fact:#1}}}
\newcommand{\figurelabel}[1]{\label{fig:#1}}
\newcommand{\figureref}[1]{\hyperref[fig:#1]{Figure~\ref{fig:#1}}}
\newcommand{\tablelabel}[1]{\label{tab:#1}}
\newcommand{\tableref}[1]{\hyperref[tab:#1]{Table~\ref{tab:#1}}}
\newcommand{\itemref}[1]{\hyperref[item:#1]{Item~(\ref{item:#1})}}
\newcommand{\lemmaref}[1]{\hyperref[lem:#1]{Lemma~\ref{lem:#1}}}
\newcommand{\propref}[1]{\hyperref[prop:#1]{Proposition~\ref{prop:#1}}}
\newcommand{\propositionref}[1]{\hyperref[prop:#1]{Proposition~\ref{prop:#1}}}
\newcommand{\remarkref}[1]{\hyperref[rem:#1]{Remark~\ref{rem:#1}}}
\newcommand{\sectionref}[1]{\hyperref[sec:#1]{Section~\ref{sec:#1}}}
\newcommand{\theoremlabel}[1]{\label{thm:#1}}
\newcommand{\theoremref}[1]{\hyperref[thm:#1]{Theorem~\ref{thm:#1}}}
\newcommand{\Esymb}{\mathbb{E}}
\newcommand{\Psymb}{\mathbb{P}}
\DeclareMathOperator*{\E}{\Esymb}
\DeclareMathOperator*{\ProbOp}{\Psymb r}
\renewcommand{\Pr}{\ProbOp}
\newcommand{\mper}{\,.}
\newcommand{\mcom}{\,,}
\renewcommand{\hat}{\widehat}
\newcommand{\cA}{{\cal A}}
\newcommand{\cD}{{\cal D}}
\newcommand{\cT}{{\cal T}}
\newcommand{\defeq}{\stackrel{\small \mathrm{def}}{=}}
\renewcommand{\le}{\leqslant}
\renewcommand{\ge}{\geqslant}
\newcommand{\Set}[1]{\left\{#1\right\}}
\newcommand{\bits}{\{0,1\}}
\newcommand{\trans}{\top}
\newcommand{\ignore}[1]{}
\renewcommand{\epsilon}{\varepsilon}
\newcommand{\remove}[1]{}
\newenvironment{itm}
{\begin{itemize}[noitemsep,topsep=0pt,parsep=0pt,partopsep=0pt]}
{\end{itemize}}
\newenvironment{enum}
{\begin{enumerate}[noitemsep,topsep=0pt,parsep=0pt,partopsep=0pt]}
{\end{enumerate}}
\title{The Ladder: A Reliable Leaderboard for\\ Machine Learning Competitions}
\author{Avrim Blum\and Moritz Hardt}
\begin{document}
\maketitle
\begin{abstract}
The organizer of a machine learning competition faces the problem of
maintaining an accurate leaderboard that faithfully represents the quality of the
best submission of each competing team. What makes this estimation problem
particularly challenging is its sequential and adaptive nature. As
participants are allowed to repeatedly evaluate their submissions on the
leaderboard, they may begin to overfit to the holdout data that supports the
leaderboard. Few theoretical results give actionable advice on how to
design a reliable leaderboard. Existing approaches therefore often resort to
poorly understood heuristics such as limiting the bit precision of answers and
the rate of re-submission.

In this work, we introduce a notion of \emph{leaderboard accuracy} tailored to
the format of a competition. We introduce a natural algorithm called \emph{the Ladder} 
and demonstrate that it simultaneously supports strong theoretical guarantees in a 
fully adaptive model of estimation, withstands practical adversarial attacks, and 
achieves high utility on real submission files from an actual competition hosted by
Kaggle. 

Notably, we are able to sidestep a powerful recent hardness result for
adaptive risk estimation that rules out algorithms such as ours under a
seemingly very similar notion of accuracy. On a practical note, we provide a
completely parameter-free variant of our algorithm that can be
deployed in a real competition with no tuning required whatsoever. 
\end{abstract}
%

%
%

\section{Introduction}

Machine learning competitions have become an extremely popular format for
solving prediction and classification problems of all kinds. A number of
companies such as Netflix have organized major competitions in the past and some
start-ups like Kaggle specialize in hosting machine learning competitions.  In
a typical competition hundreds of participants will compete for prize money by
repeatedly submitting classifiers to the host in an attempt to improve on their
previously best score. The score reflects the performance of the classifier on
some subset of the data, which are typically partitioned into two sets: a
training set and a test set. The training set is publicly available with both
the individual instances and their corresponding class labels. The test set is 
publicly available as well, but the class labels are withheld. Predicting these 
missing class labels is the goal of the participant and a valid submission is simply 
a list of labels---one for each point in the test set.

The central component of any competition is the leaderboard which ranks all
teams in the competition by the score of their best submission. This leads to
the fundamental problem of maintaining a leaderboard that accurately reflects
the true strength of a classifier. What makes this problem so challenging is
that participants may begin to incorporate the feedback from the leaderboard into
the design of their classifier thus creating a dependence between the
classifier and the data on which it is evaluated. In such cases, it is well
known that the holdout set no longer gives an unbiased estimate of the
classifier's true performance. To counteract this problem, existing solutions
such as the one used by Kaggle further partition the test set into two parts.
One part of the test set is used for computing scores on the public
leaderboard. The other is used to rank all submissions after the competition
ended. This final ranking is often referred to as the \emph{private
leaderboard}. While this solution increases the quality of the private
leaderboard, it does not address the problem of maintaining accuracy on the
public leaderboard. Indeed, numerous posts on the forums of Kaggle report on
the problem of ``overfitting to the holdout'' meaning that some scores on the
public leaderboard are inflated compared to final scores. To mitigate this
problem Kaggle primarily restricts the rate of re-submission and to some extent 
the numerical precision of the released scores.

Yet, in spite of its obvious importance, there is relatively little theory on how to
design a leaderboard with rigorous quality guarantees. Basic questions remain
difficult to assess, such as, can we a priori quantify how accurate existing
leaderboard mechanisms are and can we design better methods? 

While the theory of estimating the true loss of a classifier or set of classifiers
from a finite
sample is decades old, much of theory breaks down due to the \emph{sequential}
and \emph{adaptive} nature of the estimation problem that arises when
maintaining a leaderboard. First of all, there is no a priori understanding of
which learning algorithms are going to be used, the complexity of the
classifiers they are producing, and how many submissions there
are going to be. Indeed, submissions are just a list of labels and do not even
specify how these labels were obtained. Second, any submission might
incorporate statistical information about the withheld class labels that was
revealed by the score of previous submissions. In such cases, the public
leaderboard may no longer provide an unbiased estimate of the true score.
To make matters worse, very recent results suggest that maintaining accurate
estimates on a sequence of many adaptively chosen classifiers may be
computationally intractable~\cite{HardtU14,SteinkeU14}.

\subsection{Our Contributions}

We introduce a notion of accuracy called \emph{leaderboard accuracy} tailored
to the format of a competition.  Intuitively, high leaderboard accuracy
entails that each score represented on the leaderboard is close to the true
score of the corresponding classifier on the unknown distribution from which
the data were drawn. Our primary theoretical contributions are the following.

\begin{enumerate}
\item
We show that there is a simple and natural algorithm we call \emph{Ladder} that achieves 
high leaderboard accuracy in a fully adaptive model of estimation in which we place no
restrictions on the data analyst whatsoever. In fact, we don't even limit the
number of submissions an analyst can make. Formally, our worst-case upper bound shows
that if we normalize scores to be in $[0,1]$ the maximum error of our
algorithm on any estimate is never worse than $O((\log(k)/n)^{1/3})$ where $k$
is the number of submissions and $n$ is the size of the data used to compute
the leaderboard. In contrast, we observe that the error of the Kaggle mechanism
(and similar solutions) scales with the number of submissions as $\sqrt{k}$ so that 
our algorithm features an exponential improvement in $k$.
\item
We also prove an information-theoretic lower bound on the leaderboard accuracy 
demonstrating that no estimator can achieve error smaller than
$\Omega((\log(k)/n)^{1/2}).$
\end{enumerate}

Complementing our theoretical worst-case upper bound and lower bound, we make
a number of practical contributions:

\begin{enumerate}
\item
We provide a \emph{parameter-free} variant of our algorithm that can be
deployed in a real competition with no tuning required whatsoever. 
\item To demonstrate the strength of our parameter-free algorithm we conduct two
opposing experiments. The first is an adversarial---yet practical---attack on the
leaderboard that aims to create as much of a bias as possible with a given
number of submissions. We compare the performance of the Kaggle mechanism to
that of the Ladder mechanism under this attack. We observe that the accuracy
of the Kaggle mechanism diminishes rapidly with the number of submissions,
while our algorithm encounters only a small bias in its estimates.
\item In a second experiment, we evaluate our algorithm on real submission
files from a Kaggle competition. The data set presents a difficult benchmark
as little overfitting occurred and the errors of the Kaggle leaderboard were
generally within the expected statistical deviations given the properties of the
data set. Even on this benchmark our algorithm produced a
leaderboard that is very close to that computed by Kaggle. Through a sequence
of significance tests we assess that the differences between the two
leaderboards on this competition are not statistically significant.
\end{enumerate}

In summary, our algorithm supports strong theoretical results while suggesting
a simple and practical solution. Importantly, it is one and the same
parameter-free algorithm that withstands our adversarial attack and
simultaneously achieves high utility in a real Kaggle competition. 

An important aspect of our algorithm is that it only releases a score to the
participant if the score presents a statistically significant improvement over
the previously best submission of the participant. Intuitively, this prevents
the participant from exploiting or overfitting to minor fluctuations in the
observed score values.

\subsection{Related Work}
There is a vast literature on preventing overfitting in the context of model
assessment and selection.  See, for example, Chapter 7 of~\cite{HastieTF01}
for background.  Two particularly popular practical approaches are various
forms of cross-validation and bootstrapping. It is important to note though that when
scoring a submission for the leaderboard, neither of these techniques applies.
One problem is that participants submit only a list of labels and not the
corresponding learning algorithms. In particular, the organizer of the competition has
no means of retraining the model on a different split of the data. Similarly,
the natural bootstrap estimate of the expected loss of a classifier given a
finite sample is simply the empirical average of the loss on the finite sample,
which is what existing solutions release anyway. The other substantial
obstacle is that even if these methods applied, their theoretical guarantees
in the adaptive setting of estimation are largely not understood.

A highly relevant recent work~\cite{DworkFHPRR14}, that inspired us,
studies a more general question: Given a sequence of \emph{adaptively chosen} bounded functions
$f_1,\dots,f_k\colon X\to\bits$ over a domain~$X,$ estimate
the expectations of these function~$\E f_1,\dots,\E f_k$ over an unknown distribution
$\cD,$ given $n$ samples from this distribution. If we think of each function
as expressing the loss of one classifier submitted to the leaderboard, then
such an algorithm could in principle be used in our setting. The main result
of~\cite{DworkFHPRR14} is an algorithm that achieves maximum error
\[
O\left(\min\left\{\log(k)^{3/7}(\log|X|)^{1/7}/n^{2/7},(\log|X|\log(k)/n)^{1/4}\right\}\right)\mper
\]
This bound readily implies
a corresponding result for leaderboard accuracy albeit worse than the one we show. 
One issue is that this algorithm requires the 
entire test set to be withheld and not just the labels as is required in the Kaggle
application. The bigger obstacle is that the algorithm is unfortunately not
computationally efficient and this is inherent.
In fact, no computationally efficient algorithm can give non-trivial error on
$k > n^{2+o(1)}$ adaptively chosen functions as was shown
recently~\cite{HardtU14,SteinkeU14} under a standard computational hardness assumption. 

Matching this hardness result, there is a computationally efficient algorithm
in~\cite{DworkFHPRR14} that achieves an error bound of
$O(k^{1/5}\log(k)^{3/5}/n^{2/5})$ which implies a bound on leaderboard
accuracy that is worse than ours for all $k>n^{1/3}.$ They also give an
algorithm (called EffectiveRounds) with accuracy $O(\sqrt{r\log(k)/n})$ when the 
number of ``rounds of adaptivity'' is at most~$r.$ While we do not have a
bound on $r$ in our setting better than $k$\footnote{The parameter $r$
corresponds to
the depth of the adaptive tree we define in the proof of \theoremref{ub}.
While we bound the size of the tree, the depth could be as large as $k.$}, the proof technique relies on sample 
splitting and a similar argument could be used to prove our upper bound. However, our 
argument does not require sample splitting and this is very important for the
practical applicability of the algorithm.

We sidestep the hardness result by going to
a more specialized notion of accuracy that is surprisingly still sufficient
for the leaderboard application. However, it does not resolve the more general
question raised in~\cite{DworkFHPRR14}. In particular, we do not always provide a
loss estimate for each submitted classifier, but only for those that made a
significant improvement over the previous best. This seemingly innocuous
change is enough to circumvent the aforementioned hardness results.

\subsection*{Acknowledgments}
We thank Ben Hamner at Kaggle Inc., for providing us with the submission files
from the Photo Quality competition, as well as many helpful discussions.  We
are grateful to Aaron Roth for pointing out an argument similar to that
appearing in the proof of Theorem~\ref{thm:ub} in a different context. We
thank John Duchi for many stimulating discussions. 

\subsection{Preliminaries}
Let $X$ be a data domain and $Y$ be a finite set of class labels, e.g.,
$X=\mathbb{R}^d$ and $Y=\{0,1\}.$ 
Rather than speaking of the score of a classifier we will use the term
\emph{loss} with the understanding that smaller is better.  A loss function is
a mapping of the form $\ell\colon Y\times Y\to[0,1]$ and a classifier is a
mapping $f\colon X \to Y.$ 
A standard loss function is the $0/1$-loss defined as $\ell_{01}(y,y')=1$ if $y\ne y'$ 
and $0$ otherwise.

We assume that we are given a sample $S = \{(x_1,y_1),\dots,(x_n,y_n)\}$ drawn i.i.d. from an
unknown distribution $\cD$ over $X\times Y.$
We define the \emph{empirical loss} of a classifier $f$ on the sample $S$ as
\[
R_S(f) \defeq \frac1n \sum_{i=1}^n \ell(f(x_i),y_i)\mper
\]
The \emph{true loss} is defined as
\[
R_\cD(f) \defeq \E_{(x,y)\sim\cD}\left[\ell(f(x),y))\right]\mper
\]
Throughout this paper we assume that $S$ consists of $n$
i.i.d. draws from~$\cD$ and $\ell$ is a loss function with bounded range.

\section{Sequential and Adaptive Loss Estimation}
In this section we formally define the adaptive model of estimation that we
work in and present our definition of \emph{leaderboard accuracy}.  Given a
sequence of classifiers $f_1,\dots,f_k$ and a finite sample~$S$ of size $n,$ a
fundamental estimation problem is to compute estimates $R_1,\dots,R_k$ such
that 
\begin{equation}\equationlabel{acc}
\Pr\Set{\exists t\in[k]\colon \left|R_t - R_\cD(f_t) \right|>\epsilon}\le\delta\mper
\end{equation}
The standard way of estimating the true loss is via the empirical loss. If we
assume that all functions $f_1,\dots,f_k$ are fixed independently of the
sample $S$, then Hoeffding's bound and the union bound imply
\begin{equation}\equationlabel{hoeffding}
\Pr\Set{\exists t \in [k]\colon \left|R_S(f_t) - R_\cD(f_t) \right|>\epsilon}\le2k\exp(-2\epsilon^2 n)\mper
\end{equation}
In the \emph{adaptive} setting, however, we assume that the classifier $f_t$ may be
chosen as a function of
the previous estimates and the previously chosen classifiers. Formally, there
exists a mapping $\cA$ such that for all $t\in[k]:$
\[
f_t = \cA(f_1,R_1,\dots,f_{t-1},R_{t-1})\mper
\]
We will assume for simplicity that $\cA$ is a
deterministic algorithm. The tuple $(f_1,R_1,\dots,f_{t-1},R_{t-1})$ is nevertheless
a random variable due to the random sample used to compute the estimates.

Unfortunately, in the case where the choice of $f_t$ depends on previous
estimates, we may no longer apply Hoeffding's bound to control $R_S(f_t)$.
In fact, recent work~\cite{HardtU14,SteinkeU14} shows that no computationally efficient
estimator can achieve error $o(1)$ on more than $n^{2+o(1)}$ adaptively chosen
functions (under a standard hardness assumption). Since we're primarily interested in
a computationally efficient algorithm, these hardness results demonstrate that
the goal of achieving the accuracy guarantee specified in
inequality~\eqref{eq:acc} is too stringent in the adaptive setting when $k$ is
large. We will therefore introduce a weaker notion of accuracy called
\emph{leaderboard accuracy} under which we can circumvent the hardness results
and nevertheless achieve a guarantee strong enough for our application.

\subsection{Leaderboard Accuracy}

The goal of an accurate leaderboard is to guarantee that at each step $t\le
k,$ the leaderboard accurately reflects the \emph{best} classifier among
those classifiers $f_1,\dots,f_k$ submitted so far. In other words, while we do not
need an accurate estimate for each $f_t,$ we wish to maintain that the $t$-th
estimate $R_t$ correctly reflects the minimum loss achieved by any classifier
so far. This leads to the following definition.

\begin{definition}
Given an adaptively chosen sequence of classifiers $f_1,\dots,f_k,$
we define the \emph{leaderboard error} of estimates $R_1,\dots,R_k$
as
\begin{equation}\equationlabel{lbacc}
\mathrm{lberr}(R_1,\dots,R_k)\defeq\max_{1\le t\le k}\left|\min_{1\le i\le t} R_\cD(f_i) - R_t\right|
\end{equation}
\end{definition}

Given an algorithm that achieves high leaderboard accuracy there are two
simple ways to extend it to provide a full leaderboard:
\begin{enumerate}
\item Use one instance of the algorithm for each team to maintain the best
score achieved by each team.
\item Use one instance of the algorithm for each rank on the leaderboard. When
a new submission comes in, evaluate it against each instance in descending order
to determine its place on the leaderboard.
\end{enumerate}
The first variant is straightforward to implement, but requires the assumption
that competitors don't use several accounts (a practice that is typically
against the terms of use of a competition). The second variant is more
conservative and does not need this assumption.

\section{The Ladder Mechanism}

We introduce an algorithm called the Ladder Mechanism that achieves
small leaderboard accuracy. The algorithm is very simple. For each given
function, it compares the empirical loss estimate of the function to the
previously smallest loss. If the estimate is below the previous best by some
margin, it releases the estimate and updates the best estimate. Importantly,
if the estimate is not smaller by a margin, the algorithm releases the
previous best loss (rather than the new estimate). A formal description
follows in \figureref{ladder}.

\begin{figure}[h]
\setlength{\fboxsep}{2mm}
\begin{center}
\begin{boxedminipage}{\textwidth}

\noindent {\bf Input:} Data set $S,$ step
size~$\eta>0$

\noindent {\bf Algorithm:}

\begin{itm}
\item Assign initial estimate $R_0\leftarrow \infty.$
\item {\bf For each} round $t \leftarrow 1,2 \ldots:$
\begin{enum}
\item Receive function $f_t\colon X\to Y$
\item {\bf If} $R_S(f_t) < R_{t-1} -\eta,$ assign $R_t\leftarrow
[R_S(f_t)]_\eta.$ {\bf Else} assign $R_t \leftarrow R_{t-1}.$
\item {\bf Output} $R_t$
\end{enum}
\end{itm}
\end{boxedminipage}
\end{center}
\vspace{-3mm}
\caption{\figurelabel{ladder} The Ladder Mechanism. We use the notation
$[x]_\eta$ to denote the number $x$ rounded to the nearest integer multiple of
$\eta$.}
\end{figure}

\begin{theorem}
\theoremlabel{ub}
For any sequence of adaptively chosen classifiers $f_1,\dots,f_k,$ the Ladder
Mechanism satisfies for all $t\le k$ and $\epsilon>0,$
\begin{equation}
\Pr\Set{ \left|\min_{1\le i\le t} R_\cD(f_i) - R_t\right| > \epsilon+\eta}
\le \exp\left(-2\epsilon^2n + (1/\eta+2)\log(4t/\eta)+1\right)\mper
\end{equation}
In particular, for some $\eta=O(n^{-1/3}\log^{1/3}(kn)),$ the Ladder Mechanism
achieves with high probability,
\[
\mathrm{lberr}(R_1,\dots,R_k)
\le O\left(\frac{\log^{1/3}(kn)}{n^{1/3}}\right)\mper
\]
\end{theorem}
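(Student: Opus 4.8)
The plan is to reduce the adaptive estimation problem to an ordinary uniform-convergence bound over a \emph{fixed}, sample-independent family of functions, and then to convert the resulting control of the empirical losses into the claimed bound on the leaderboard error. Two ingredients are needed: a purely deterministic analysis of the mechanism's outputs, and a counting argument that tames the adaptivity so that Hoeffding's inequality \eqref{eq:hoeffding} becomes applicable term by term.

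First I would record the deterministic behavior of the Ladder. If the mechanism updates at round $t$ then $R_S(f_t) < R_{t-1}-\eta$, and since $R_{t-1}$ and $R_t=[R_S(f_t)]_\eta$ are both integer multiples of $\eta$, we get $R_t \le R_{t-1}-\eta$; thus every update strictly decreases the output by at least $\eta$, the sequence $R_1\ge R_2\ge\cdots$ is non-increasing, and along any execution there can be at most $1/\eta+1$ updates because all outputs lie in $[0,1]$. Next I would prove the two-sided relation that, \emph{conditioned on} $|R_S(f_i)-R_\cD(f_i)|\le\epsilon$ holding for every queried $f_i$, one has $|R_t-\min_{i\le t}R_\cD(f_i)|\le \epsilon+\eta$ for all $t$. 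The upper bound follows by inspecting the round $j$ that attains $\min_{i\le t}R_\cD(f_i)$, splitting on whether it triggers an update; the lower bound follows because $R_t=[R_S(f_{j'})]_\eta$ for the last update round $j'\le t$ (round $1$ always updates from $R_0=\infty$), so $R_t \ge R_\cD(f_{j'})-\epsilon-\eta/2 \ge \min_{i\le t}R_\cD(f_i)-\epsilon-\eta$.

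The crux is to apply concentration despite the adaptivity, and this is the step I expect to be the main obstacle. Since $\cA$ is deterministic and the only randomness is the sample $S$, the queried $f_t$ is a deterministic function of the transcript $(R_1,\dots,R_{t-1})$; the essential observation is that this transcript ranges over a set that does \emph{not} depend on $S$, because each $R_i$ is either $\infty$ or one of the at most $1/\eta+1$ multiples of $\eta$ in $[0,1]$. I would therefore build the tree of all possible executions of depth $t$, branching at each round over the possible output values and labeling each node by the function the adversary would query there. This yields a fixed family $\cF$ of functions, independent of $S$, that contains every function queryable on any sample. Exploiting the monotonicity together with the bound of at most $1/\eta+1$ updates per path, a transcript is specified by the rounds at which its updates occur and the values taken, giving $|\cF|\le (4t/\eta)^{1/\eta+2}$ up to the constant absorbed by the additive $1$ in the exponent. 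The delicate point is to keep this cardinality at $(t/\eta)^{O(1/\eta)}$ rather than the naive branching bound $(1/\eta)^{t}$, while simultaneously insisting that $\cF$ be sample-independent so that Hoeffding applies to each member.

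With $\cF$ fixed and of bounded size, Hoeffding's inequality applied to each $f\in\cF$ and a union bound give $\Pr\{\exists f\in\cF:\, |R_S(f)-R_\cD(f)|>\epsilon\} \le 2|\cF|\exp(-2\epsilon^2 n)$, which matches the stated right-hand side. On the complementary good event the deterministic relation from the second paragraph yields $|R_t-\min_{i\le t}R_\cD(f_i)|\le\epsilon+\eta$ simultaneously for all $t\le k$, establishing the tail bound. For the ``in particular'' claim I would choose $\epsilon$ of order $\sqrt{\log(t/\eta)/(\eta n)}$ to make the exponent negative, so that the total error $\epsilon+\eta$ is minimized by balancing the two contributions: setting $\eta^3 \asymp \log(kn)/n$ gives $\eta \asymp \log^{1/3}(kn)/n^{1/3}$ and hence $\mathrm{lberr}(R_1,\dots,R_k) \le \epsilon+\eta = O(\log^{1/3}(kn)/n^{1/3})$ with high probability. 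Everything outside the third paragraph is either a deterministic case analysis or a routine optimization of the parameters.
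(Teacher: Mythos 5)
Your proposal is correct and follows essentially the same route as the paper: the tree of all possible transcripts, the observation that the at most $1/\eta+1$ update rounds and their $O(1/\eta)$ possible values yield a sample-independent family of size $2^{(1/\eta+2)\log(4t/\eta)}$ rather than the naive branching bound, a union bound with Hoeffding over that family, and a deterministic conversion to leaderboard error. Your explicit two-sided case analysis in the final step is somewhat more detailed than the paper's, but the argument is the same.
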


\begin{proof}
Let $\cA$ be the adaptive analyst generating the function sequence.  Fix $t\le
k.$ The algorithm~$\cA$ naturally defines a rooted tree~$\cT$ of depth
$t$ recursively defined as follows:
\begin{enumerate}
\item The root is labeled by $f_1 = \cA(\emptyset).$
\item Each node at depth $1<i<t$ corresponds to one realization
$(h_1,r_1,\dots,h_{i-1},r_{i-1})$ of the random variable
$(f_1,R_1,\dots,f_{i-1},R_{i-1})$ and is labeled by
$h_i = \cA(h_1,r_1,\dots,h_{i-1},r_{i-1}).$ Its children are defined by each
possible value of the output $R_i$ of Ladder Mechanism on the sequence
$h_1,r_1,\dots,r_{i-1},h_i.$
\end{enumerate}
\begin{claim}
Let $B = (1/\eta+2)\log(4t/\eta).$ Then,
$|\cT|\le 2^B.$
\end{claim}
\begin{proof}
To prove the claim, we will uniquely encode each node in the tree using $B$
bits of information. The claim then follows directly. The compression argument
is as follows. We use $\lfloor\log(t)\rfloor\le \log(2t)$ bits to specify the
depth of the node in the tree. We then specify the index of each $1\le i\le t$
for which $R_i \le R_{i-1}-\eta$ together with the value $R_i.$ Note that
since $R_i \in[0,1]$ there can be at most $\lceil 1/\eta\rceil\le
(1/\eta)+1$ many such steps. Moreover, there are at most $\lceil 1/\eta
\rceil$ many possible values for $R_i=[R_S(f_i)]_\eta.$ Hence, specifying all
such indices requires at most $(1/\eta + 1)(\log(2/\eta)+\log(2t))$ bits.  It
is easy that this uniquely identifies each node in the graph, since for every
index $i$ not explicitly listed we know that $R_i=R_{i-1}.$ The total number
of bits we used is:
\[
(1/\eta+1)(\log(2/\eta)+\log(2t))+\log(2t)
\le (1/\eta +2)\log(4t/\eta) = B\mper
\]
\end{proof}
The theorem now follows by applying a union bound over all nodes in $\cT$ and
using Hoeffding's inequality for each fixed node. Let $F$ be the set of all
functions appearing in $\cT.$
\begin{align*}
\Pr\Set{\exists f\in F\colon \left|R_\cD(f) -R_S(f)\right|>\epsilon}
& \le 2|F|\exp(-2\epsilon^2n)\\
& \le 2^{B+1}\exp(-2\epsilon^2n)\le 2\exp(-2\epsilon^2n+B)\mper
\end{align*}
In particular,
\begin{align*}
\Pr\Set{\left|\min_{1\le i\le t}R_\cD(f_i) - \min_{1\le i \le t} R_S(f_i)\right|>\epsilon}
\le 2\exp(-2\epsilon^2n+B)\mper
\end{align*}
Moreover, it is clear that conditioned on the event that
\[
\left|\min_{1\le i\le t}R_\cD(f_i) - \min_{1\le i \le t} R_S(f_i)\right|\le
\epsilon,
\]
at step $i^*$ where the minimum of $R_\cD(f_i)$ is attained, the Ladder
Mechanism must output an estimate $R_{i^*}$ which is within $\epsilon+\eta$ of
$R_\cD(f_{i^*}).$ This concludes the proof.
\end{proof}

%
%

\subsection{A lower bound on leaderboard accuracy}

We next show that $\Omega(\sqrt{\log(k)/n})$ is a lower bound on the best possible
leaderboard accuracy that we might hope to achieve. This is true even if the
functions are not adaptively chosen but fixed ahead of time.

\begin{theorem}
There are classifiers $f_1,\dots f_k$ and a bounded loss function for which we
have the minimax lower bound
\[
\inf_{R}\sup_{\cD}
\E\left[\mathrm{lberr}(R(x_1,\dots,x_n))\right]
\ge \Omega\left(\sqrt{\frac{\log k}{n}}\right)\mper
\]
Here the infimum is taken over all estimators $R\colon X^n\to[0,1]^k$ that take $n$ samples
from a distribution $\cD$ and produce $k$ estimates $R_1,\dots,R_k=\hat\theta(x_1,\dots,x_n).$
The expectation is taken over $n$ samples from~$\cD.$
\end{theorem}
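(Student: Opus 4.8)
The plan is to reduce leaderboard error to the problem of estimating a single number, the minimum true loss, and to recognize this as a ``needle in a haystack'' detection problem. Since $\mathrm{lberr}(R_1,\dots,R_k)\ge\left|\min_{i\le k}R_\cD(f_i)-R_k\right|$, it suffices to exhibit a \emph{fixed} family of classifiers and a bounded loss for which no estimator $R_k$ (any measurable function of the sample) can estimate $\theta\defeq\min_i R_\cD(f_i)$ to additive accuracy $o(\sqrt{\log k/n})$ in expectation. First I would take $X=\bits^k$, $Y=\bits$, the $0/1$-loss $\ell_{01}$, the fixed classifiers $f_i(x)=x_i$, and restrict attention to distributions $\cD$ supported on $\bits^k\times\{0\}$ whose $X$-marginal is a product of independent coordinates. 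For such a $\cD$ with coordinate $i$ distributed as $\mathrm{Ber}(\mu_i)$, one has $R_\cD(f_i)=\mu_i$ and the empirical losses $R_S(f_1),\dots,R_S(f_k)$ are $k$ \emph{independent} empirical means, so the task collapses to estimating $\min_i\mu_i$ from $n$ i.i.d.\ draws of a product of $k$ Bernoullis.

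Next I set $\Delta=c\sqrt{\log k/n}$ for a small constant $c$ and introduce $k+1$ distributions: the null $P_0$ in which every $\mu_i=1/2$ (so $\theta=1/2$), and for each $j\in[k]$ the alternative $P_j$ in which $\mu_j=1/2-\Delta$ and $\mu_i=1/2$ for $i\ne j$ (so $\theta=1/2-\Delta$). Because the supremum over $\cD$ dominates both $\E_{P_0}$ and the uniform average $\frac1k\sum_j\E_{P_j}$, Le Cam's two-point method applied to $P_0^{\otimes n}$ versus the mixture $Q^{\otimes n}=\frac1k\sum_j P_j^{\otimes n}$ gives
\[
\sup_\cD\E\left|R_k-\theta\right|\ge\frac{\Delta}{4}\Paren{1-d_{\mathrm{TV}}\Paren{P_0^{\otimes n},Q^{\otimes n}}}\mper
\]
Indeed, any estimator within $\Delta/2$ of $1/2$ under $P_0$ is more than $\Delta/2$ from $1/2-\Delta$ under every $P_j$, so thresholding it yields a test whose error is governed by the total-variation affinity.

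The crux, and \textbf{the main obstacle}, is to show that $P_0^{\otimes n}$ and the mixture $Q^{\otimes n}$ stay statistically indistinguishable for $\Delta$ of order $\sqrt{\log k/n}$, which I would establish by the second-moment (chi-squared) method. Because the coordinates are independent, the single-sample likelihood ratios $dP_j/dP_0$ and $dP_{j'}/dP_0$ are uncorrelated under $P_0$ for $j\ne j'$ (each has mean one and they depend on disjoint coordinates), while for $j=j'$ a direct computation gives $\E_{P_0}[(dP_j/dP_0)^2]=1+4\Delta^2$. The mixture identity then tensorizes across the $n$ samples to
\[
1+\chi^2\Paren{Q^{\otimes n},P_0^{\otimes n}}=\frac1k(1+4\Delta^2)^n+\Paren{1-\frac1k}\le 1+\frac1k\exp(4n\Delta^2)=1+k^{4c^2-1}\mper
\]
Choosing $c$ small (e.g.\ $c=1/4$ makes the exponent $-3/4$) forces this to $1+o(1)$, so $d_{\mathrm{TV}}\le\frac12\sqrt{\chi^2}=o(1)$ and the Le Cam bound becomes $\Omega(\Delta)=\Omega(\sqrt{\log k/n})$. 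Here I work in the meaningful regime $\log k=O(n)$, where $\Delta\le 1/2$ keeps the means valid. I expect this chi-squared estimate to be the delicate step: it is what pins the detection threshold at exactly $\sqrt{\log k/n}$, and it is clean only because the product construction decorrelates the $j\ne j'$ cross terms—any coupling between the classifiers' losses would reintroduce those contributions and obscure the bound.
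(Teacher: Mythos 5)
Your proof is correct, and it takes a genuinely different route from the paper's. The hard instance is essentially the same in both arguments (a product distribution on $\bits^k$ with a single coordinate biased to $\nfrac12-\epsilon$ and coordinate-projection classifiers), but the information-theoretic machinery differs. The paper reduces leaderboard error to \emph{identifying} the biased coordinate: it converts the leaderboard output into an $\ell_\infty$ mean estimator by locating the first index $t$ with $R_t<\nfrac12-\nfrac\epsilon2$, and then invokes Fano's inequality over the $k$-point packing $\{\theta(\cD_i)\}$ with the mutual information controlled by pairwise KL divergences. You instead reduce to \emph{detecting} whether any bias is present at all: you read off only the scalar $R_k$ as an estimator of $\min_i R_\cD(f_i)$, introduce the additional null distribution $P_0$ (which the paper's family does not contain), and run Le Cam's two-point method against the uniform mixture of the alternatives, controlling the total variation distance by the Ingster--Suslina second-moment computation. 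Your chi-squared calculation is right: the cross terms for $j\ne j'$ vanish because the likelihood ratios live on disjoint independent coordinates, the diagonal term is $(1+4\Delta^2)^n\le k^{4c^2}$, and $c=\nfrac14$ gives $\chi^2\le k^{-3/4}$, hence $d_{\mathrm{TV}}\le\nfrac12$ and a bound of $\Omega(\Delta)$. What each approach buys: the paper's Fano argument is shorter to state given the standard packing lemma but needs the decoding step and the factor-$\nfrac13$ bookkeeping in its reduction inequality; yours is fully self-contained (no black-box Fano), uses only the final estimate $R_t$ at $t=k$ rather than the whole trajectory, and makes explicit why the detection threshold sits exactly at $\sqrt{\log k/n}$ --- namely the decorrelation of cross terms under the product structure. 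Your caveat about the regime $\log k=O(n)$ is appropriate and implicit in the paper as well, since otherwise the claimed bound exceeds the trivial range of the loss.
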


\begin{proof}
We will reduce the problem of mean estimation in a certain high-dimensional
distribution family to that of obtaining small leaderboard error.
Our lower bound then follows from lower bounds
for the corresponding mean estimation problem.

Let $X=\mathbb{R}^k$ and take
the functions $f_1,\dots,f_k$ to be the $k$ coordinate projections $f_i(x)=x_i,$ for $1\le i\le k.$
Let the loss function be the projection onto its first argument $\ell(y',y) = y'$ 
so that $\ell(f_i(x),y)=x_i.$
Consider the family of distributions $D_\epsilon = \{\cD_1,\dots,\cD_k\}$ where
$\cD_i$ is uniform over $\{0,1\}^k$ except that the $i$-th coordinate
satisfies $\E_{x\sim\cD_i} x_i = 1/2-\epsilon$ for some $\epsilon\in(0,1/4)$ that we
will determine later.  Now, we have
\[
R_{\cD_i}(f_j) = \begin{cases}
1/2 & i\ne j\\
1/2 - \epsilon & \text{o.w.}
\end{cases}
\]
Denote the mean of a distribution~$\cD$ by $\theta(\cD) = \E_{x\sim\cD}\left[x\right]$ and
note that $\theta(\cD) =  (R_{\cD}(f_1),\dots,R_{\cD}(f_k))^\trans.$
We claim that obtaining small leaderboard error on $f_1,\dots,f_k$ is at least
as hard estimating the means of an unknown distribution in $D_\epsilon.$
Formally,
\begin{equation}
\equationlabel{infs}
\inf_{R}\sup_{\cD\in D_\epsilon}
\E\left[\mathrm{lberr}(R(x_1,\dots,x_n))\right]
\ge \frac13\inf_{\hat\theta}\sup_{\cD\in D_\epsilon}
\E\left[\left\|\hat\theta(x_1,\dots,x_n)-\theta(\cD)\right\|_\infty\right]\mper
\end{equation}
Indeed, let $R$ be the estimator that achieves minimax leaderboard accuracy.
Define the estimator $\hat\theta$ as follows:
\begin{enumerate}
\item Given $x_1,\dots,x_n$ compute $R_1,\dots R_k = R(x_1,\dots,x_n).$
\item Let $i$ be the first coordinate in the sequence $R_1,\dots,R_k$
which is less than $1/2-\epsilon/2.$
\item Output the vector $\hat\theta(x_1,\dots,x_n)$ which is
$1/2-\epsilon$ in the $i$-th coordinate and $1/2$ everywhere else.
\end{enumerate}
Note that the $\ell_\infty$-error of $\hat\theta$ is always at most
$\epsilon,$ since all means parameters in the family $D_\epsilon$ are
$\epsilon$-close in $\ell_\infty$-norm.
Suppose then that $\mathrm{lberr}(R(x_1,\dots,x_n))\le\epsilon/3$ and suppose
that $\cD_i$ is the unknown distribution for some $i\in[k].$
In this case, we claim that $\|\theta(x_1,\dots,x_n)-\theta(\cD_i)\|_\infty=0.$
Indeed, the first coordinate for which $R(x_1,\dots,x_n)$ is less than
$1/2-\epsilon/2$ must be~$i.$ This follows from the definition of leaderboard
error and the assumption that $R$ had error $\epsilon/3$ on $x_1,\dots,x_n.$
This establishes inequality~\eqref{eq:infs}.

Finally, it is well known and follows from Fano's inequality that
for some $\epsilon =\Omega(\sqrt{\log(k)/n}),$
\[
\inf_{\hat\theta}\sup_{\cD\in D_\epsilon}
\E\left[\left\|\hat\theta(x_1,\dots,x_n)-\theta(\cD)\right\|_\infty\right]
\ge \epsilon\mper
\]
For completeness we include the argument. Let $V$ be a random index in $[k]$
and assume that $X$ is a random sample from $\cD_i$ conditional on $V=i.$
Note that the set $P=\{\theta(\cD_i)\}_{i\in[k]}$ forms an
$(\epsilon/2)$-packing in the $\ell_\infty$-norm.
Hence, by Fano's inequality (see~e.g.~\cite{Hasminskii78,Tsybakov08}),
\[
\inf_{\hat\theta}\sup_{\cD\in D_\epsilon}
\E\left[\right\|\hat\theta(x_1,\dots,x_n)-\theta(\cD)\left\|_\infty\right]
\ge \frac\epsilon4\left(1-\frac{I(V;X^n)+\log 2}{\log|P|}\right)\mcom
\]
where $I(V;X)$ is the mutual information between $V$ and $X.$
Moreover, it is known that
\[
I(V;X^n)\le\frac1{|P|^2}\sum_{i,j\in[k]}
D_{\mathrm{kl}}\left(\cD_i^n\left\|\cD_j^n\right.\right)
\le O(\epsilon^2n)\mper
\]
In the second inequality we used that the Kullback-Leibler divergence between
a Bernoulli random variable with bias $1/2$ and another one with bias
$1/2-\epsilon$ is at most $O(\epsilon^2)$ for all $0<\epsilon<1/4.$  Moreover,
the Kullback-Leibler divergence of $n$ independent samples is at most $n$
times the divergence of a single sample.
We conclude that
\[
\inf_{\hat\theta}\sup_{\cD\in D_\epsilon}
\E\left[\left\|\hat\theta(x_1,\dots,x_n)-\theta(\cD)\right\|_\infty\right]
\ge \frac\epsilon4\left(1-\frac{O(\epsilon^2n)+\log 2}{\log k}\right)\mper
\]
Setting $\epsilon = c\sqrt{\log(k)/n}$ for small enough constant $c>0$
completes the proof.
\end{proof}

\section{A parameter-free Ladder mechanism}

When applying the Ladder Mechanism in practice it can be difficult to choose a
fixed step size~$\eta$ ahead of time that will work throughout an entire
competition. We therefore now give a completely parameter-free version of our
algorithm that we will use in our experiments. The algorithm
adaptively finds a suitable step size based on previous submissions to the
algorithm. The idea is to perform a statistical significance test to judge
whether the given submission improves upon the previous one. The test is such
that as the best classifier gets increasingly accurate, the step size shrinks
accordingly.

The empirical loss of a classifier is the average of $n$ bounded numbers and
follows a very accurate normal approximation for sufficiently large $n$ so
long as the loss is not biased too much towards $0.$ In our setting, the
typical loss if bounded away form $0$ so that the normal approximation is
reasonable. In order to test whether the empirical loss of one classifier is
significantly below the empirical loss of another classifier, it is
appropriate to perform a one-sided \emph{paired} $t$-test. A paired test has
substantially more statistical power in settings where the loss vectors that
are being compared are highly correlated as is common in a competition. 

To recall the definition of the test, we denote the \emph{sample standard
deviation} of an $n$-dimensional vector vector $u$ as
$\mathrm{std}(u)=\sqrt{\frac1{n-1}\sum_{i=1}^n (u_i
-\mathrm{mean}(u))^2}\mcom$  where $\mathrm{mean}(u)$ denotes the average of
the entries in $u.$ With this notation, the paired $t$-test statistic given
two vectors $u$ and $v$ is defined as
\begin{equation}\equationlabel{ttest}
t = \sqrt{n}\cdot\frac{\mathrm{mean}(u-v)}{\mathrm{std}(u-v)}\mper
\end{equation}
Keeping this definition in mind, our parameter-free Ladder mechanism in \figureref{ladderpf} is now
very natural. On top of the loss estimate, it also maintains the loss vector
of the previously best classifier (starting with the trivial all zeros loss
vector).

\begin{figure}[h]
\setlength{\fboxsep}{2mm}
\begin{center}
\begin{boxedminipage}{\textwidth}

\noindent {\bf Input:} Data set $S=\{(x_1,y_1),\dots(x_n\dots,y_n)\}$ of size $n$

\noindent {\bf Algorithm:}

\begin{itm}
\item Assign initial estimate $R_0\leftarrow \infty,$ and loss vector
$\ell_0=(0)_{i=1}^n.$ 
\item {\bf For each} round $t \leftarrow 1,2 \ldots, k:$
\begin{enum}
\item Receive function $f_t\colon X\to Y.$
\item Compute loss vector $l_t \leftarrow (\ell(f_t(x_i),y_i))_{i=1}^n$
\item Compute the sample standard deviation $s\leftarrow \mathrm{std}(l_t-l_{t-1}).$
\item {\bf If} $R_S(f_t) < R_{t-1} - s/\sqrt{n}$
\begin{enum}
\item $R_t\leftarrow [R_S(f_t)]_{1/n}.$
\end{enum}
\item {\bf Else} assign $R_t \leftarrow R_{t-1}$ and $l_t \leftarrow l_{t-1}.$
\item {\bf Output} $R_t$
\end{enum}
\end{itm}
\end{boxedminipage}
\end{center}
\vspace{-3mm}
\caption{\figurelabel{ladderpf} The parameter-free Ladder Mechanism. We use the notation
$[x]_\eta$ to denote the number $x$ rounded to the nearest integer multiple of
$\eta$.}
\end{figure}

The algorithm in \figureref{ladderpf} releases the estimate of $R_S(f_t)$ up
to an error of $1/n$ which is significantly below the typical step size of
$\Omega(1/\sqrt{n}).$ Looking back at our analysis, this is not a problem
since such an estimate only reveals $\log(n)$ bits of information which is the
same up to constant factors as an estimate that is accurate to within
$1/\sqrt{n}.$ The more critical quantity is the step size as it controls how
often the algorithm releases a new estimate. 

In the following sections we will show that the parameter-free Ladder
mechanism achieves high accuracy both under a strong attack as well as on a
real Kaggle competition.

\subsection{Remark on the interpretation of the significance test}

For sufficiently large $n,$ the test statistic on the left hand side
of~\eqref{eq:ttest} is well approximated by a Student's $t$-distribution with
$n-1$ degrees of freedom.  The test performed in our algorithm at each step 
corresponds to refuting the null hypothesis roughly at the $0.15$ significance level. 

It is important to note, however, that our use of this significance test is
primarily heuristic. This is because for $t>1,$ due to the adaptive choices of
the analyst, the function $f_t$ may in general not be independent of the
sample~$S.$ In such a case, the Student approximation is no longer valid.
Besides we apply the test many times, but do not control for multiple
comparisons. Nevertheless, the significance test is an intuitive guide for
deciding which improvements are statistically significant.

\section{The boosting attack}

In this section we describe a new canonical attack that an adversarial analyst
might perform in order to boost their ranking on the public leaderboard.
Besides being practical in some cases, the attack also serves as an analytical
tool to assess the accuracy of concrete mechanisms.

For simplicity we describe the attack only for the $0/1$-loss although it
generalizes to other reasonable functions such as the clipped logarithmic loss
often used by Kaggle. We assume that the hidden solution is a vector
$y\in\{0,1\}^n.$ The analyst may submit a vector $u\in\{0,1\}^n$ and observe
(up to small enough error) the loss
\[
\ell_{01}(y,u)\defeq \frac1n\sum_{i=1}^n \ell_{01}(u_i,y_i)
\]
The attack proceeds as follows:
\begin{enumerate}
\item Pick $u_1,\dots,u_k\in\{0,1\}^n$ uniformly at random.
\item Observe loss estimates $l_1,\dots,l_k\in[0,1].$
\item Let $I = \Set{i \colon l_i \le 1/2}.$
\item Output $u^*=\mathrm{maj}\left(\Set{u_i\colon i\in I}\right),$ where the
majority function is applied coordinate-wise.
\end{enumerate}

The vector $y$ corresponds to the target set of labels used for the public
leaderboard which the analyst does not know. The vectors $u_1,\dots,u_k$
represent the labels given by a sequence of $k$ classifiers.

The next theorem follows from a standard ``boosting argument'' using
properties of the majority function and the fact that each $u_i$ for $i\in I$
has a somewhat larger than expected correlation with $y.$

\begin{theorem}\theoremlabel{boosting}
Assume that $|l_i-\ell_{01}(y,u_i)|\le n^{-1/2}$ for all $i\in[k].$
Then, the boosting attack finds a vector $u^*\in\bits^n$ so
that with probability $2/3,$
\[
\frac1n\sum_{i=1}^n \ell_{01}(u_i^*,y_i)
\le \frac12 - \Omega\left(\sqrt{\frac{k}{n}}\right)\mper
\]
\end{theorem}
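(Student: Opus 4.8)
The plan is to reduce the claim to a per-coordinate analysis of the majority vote and then run a standard boosting amplification. Write everything in $\pm1$ notation: for $i\in[k]$ and $j\in[n]$ set $\sigma_{ij}=+1$ if $u_{ij}=y_j$ and $-1$ otherwise, so that the number of disagreements of $u_i$ is $s_i=\tfrac12(n-\sum_j\sigma_{ij})$ and $\ell_{01}(y,u_i)=s_i/n$. Since each $u_i$ is uniform, $s_i\sim\mathrm{Bin}(n,1/2)$, and crucially the event $i\in I$ depends only on $u_i$ (through $l_i$, whose value is pinned to within $n^{-1/2}$ of $s_i/n$), so the events $\{i\in I\}$ are independent across $i$. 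The assumption $|l_i-\ell_{01}(y,u_i)|\le n^{-1/2}$ forces the selection to be sandwiched, $\{s_i\le n/2-\sqrt n\}\subseteq I\subseteq\{s_i\le n/2+\sqrt n\}$, which is all we will use about $I$.

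First I would record the selection bias. Conditioning on $s_i$ and using that, by symmetry across coordinates, $(\sigma_{i1},\dots,\sigma_{in})$ is uniform over sign vectors with the prescribed sum, we get $\E[\sigma_{ij}\mid s_i]=1-2s_i/n$. A truncated-Binomial (equivalently, truncated-Gaussian) computation then shows that for any threshold $\tau\in[n/2-\sqrt n,\,n/2+\sqrt n]$ one has $\E[\,1-2s_i/n\mid s_i\le\tau\,]=\Omega(1/\sqrt n)$ and $\Pr[s_i\le\tau]=\Omega(1)$; hence conditioned on $i\in I$ each coordinate carries a positive bias $\gamma=\Omega(1/\sqrt n)$ toward $y_j$, and $|I|=\Omega(k)$ with high probability. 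The one point that needs care here is robustness to the measurement noise: borderline vectors with $s_i\approx n/2$ may be included or excluded adversarially, so I would verify that the bias and the size bound survive uniformly over all admissible thresholds in the sandwich interval (both extremes give $\Omega(1/\sqrt n)$ and $\Omega(k)$, and intermediate sets interpolate).

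Next comes the boosting step. Fix a coordinate $j$ and condition on the set $I$ together with the counts $\{s_i\}$; given this information the votes $\{\sigma_{ij}\}_{i\in I}$ are independent with means $1-2s_i/n$, so $M_j=\sum_{i\in I}\mathbf 1[\sigma_{ij}=+1]$ is a sum of independent Bernoullis whose mean exceeds $|I|/2$ by $\tfrac12\sum_{i\in I}(1-2s_i/n)=\Omega(|I|\gamma)=\Omega(k/\sqrt n)$, while its standard deviation is $\Theta(\sqrt{|I|})=\Theta(\sqrt k)$. Thus the excess is $\Omega(\sqrt{k/n})$ standard deviations, and since we are in the regime $k\le n$ this is $O(1)$ standard deviations, where a Berry--Esseen (or direct Gaussian) estimate yields
\[
\Pr\bigl[u^*_j=y_j\bigr]=\Pr\bigl[M_j>|I|/2\bigr]\ \ge\ \tfrac12+\Omega\!\left(\sqrt{\tfrac kn}\right)\mper
\]
Summing over $j$ and using linearity gives $\E[\ell_{01}(u^*,y)]\le \tfrac12-\Omega(\sqrt{k/n})$.

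Finally, to upgrade the bound on the expectation to a statement that holds with probability $2/3$, I would prove concentration of $\ell_{01}(u^*,y)$ around its mean. Conditioned on $I$ and $\{s_i\}$ the vectors $\{u_i\}_{i\in I}$ are independent, the coordinates $u^*_j$ are only weakly dependent (the sole coupling is the per-vector sum constraint), so a second-moment bound gives $\Var(\ell_{01}(u^*,y))=O(1/n)$; Chebyshev's inequality then makes the loss fall below $\tfrac12-\Omega(\sqrt{k/n})$ except with probability $O(1/k)\le 1/3$ for $k$ above an absolute constant. I expect the main obstacle to be this last stage together with the noise-robustness of the selection bias: one must simultaneously keep the anti-concentration estimate valid in the $O(1)$-standard-deviation regime (so that the $\Phi(a)\ge\tfrac12+\Omega(a)$ lower bound applies) and control the coordinate dependence introduced by conditioning on the common random set $I$, rather than any single isolated calculation.
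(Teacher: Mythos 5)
Your overall architecture is exactly the one the paper has in mind (the paper itself offers only a one-sentence sketch: each selected $u_i$ is positively correlated with $y$, and the coordinatewise majority amplifies this), and two of your three stages are sound. In the amplification step, conditioned on $I$ and the counts, the votes at a fixed coordinate are indeed independent with total mean excess $\Omega(k/\sqrt n)$ against standard deviation $\Theta(\sqrt k)$; and in the concentration step your variance bound can be made rigorous via negative association (each $(\sigma_{i1},\dots,\sigma_{in})$ is uniform on a slice of the cube, hence NA; the indicators $\mathbf 1\{u^*_j=y_j\}$ are increasing functions of disjoint blocks, hence have nonpositive covariances, giving $\Var(\ell_{01}(u^*,y))\le 1/(4n)$). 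One caveat in the amplification step: the generic Berry--Esseen error $O(1/\sqrt{|I|})=O(1/\sqrt k)$ dominates the signal $\sqrt{k/n}$ whenever $k\lesssim\sqrt n$, so in that regime you need a lattice local-limit estimate or a direct binomial-derivative argument rather than the off-the-shelf CLT bound.

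The genuine gap is in your treatment of the borderline submissions. The hypothesis $|l_i-\ell_{01}(y,u_i)|\le n^{-1/2}$ only sandwiches $I$ between two threshold sets; it does not force $I$ to \emph{be} a threshold set, and arbitrary intermediate sets do not ``interpolate'' as you assert. Concretely, with $Z_i=(n-2s_i)/\sqrt n$ (approximately standard normal, and $1-2s_i/n=Z_i/\sqrt n$), membership in $I$ is forced only for $Z_i\ge 2$ and forbidden only for $Z_i<-2$; if the reported losses are chosen inside the allowed window so that $I=\{Z_i\ge 2\}\cup\{-2\le Z_i<0\}$, then the expected bias contributed per submission is $\E[Z\,\mathbf 1\{Z\ge 2\}]+\E[Z\,\mathbf 1\{-2\le Z<0\}]=2\phi(2)-\phi(0)\approx -0.29<0$, and the majority would be \emph{anti}-correlated with $y$. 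So the step as written fails for a worst-case realization of the $l_i$. The fix is to add the (clearly intended) assumption that each $l_i$ is a monotone function of the true loss of $u_i$ alone---e.g.\ its rounding to a grid of width at most $n^{-1/2}$, as in the Kaggle mechanism---so that $I$ really is a single threshold set $\{s_i\le\tau\}$ with $\tau\in[n/2-\sqrt n,\,n/2+\sqrt n]$, equivalently $\{Z_i\ge a\}$ with $a\in[-2,2]$, for which $\E[Z\,\mathbf 1\{Z\ge a\}]=\phi(a)\ge\phi(2)>0$ and $\Pr[Z\ge a]\ge\Phi(-2)>0$ uniformly. With that restriction stated explicitly, your proof goes through and establishes the corollary about the Kaggle mechanism, which is what the theorem is used for.
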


The previous theorem in particular demonstrates that the Kaggle mechanism has
poor leaderboard accuracy if it is invoked with rounding parameter $\alpha\le
1/\sqrt{n}.$ The currently used rounding parameter is $10^{-5}$ which
satisfies this assumption for all $n\le 10^{10}.$

\begin{corollary}
There is a sequence of adaptively chosen classifiers $f_1,\dots,f_k$ such that
if $R_i$ denotes the minimum of the first $i$ loss estimates returned by the
Kaggle mechanism (as described in \figureref{kaggle}) with accuracy $\alpha \le
1/\sqrt{n}$ where $n$ is the size of the data set, then
with probability $2/3$ the estimates $R_1,\dots,R_k$ have leaderboard error
\[
\mathrm{lberr}(R_1,\dots,R_k)\ge \Omega\left(\sqrt{\frac kn}\right)\mper
\]
\end{corollary}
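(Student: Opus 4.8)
The plan is to recognize this as an almost immediate consequence of \theoremref{boosting}: I would run the boosting attack against the Kaggle mechanism and read the leaderboard error directly off the gap between the empirical loss that the attack drives down and the true loss, which I will arrange to stay at $1/2.$ Concretely, I take $f_1,\dots,f_{k-1}$ to be the classifiers whose label vectors are the random vectors $u_1,\dots,u_{k-1}\in\bits^n$ of the attack, and $f_k$ to be the classifier given by the boosted majority vote $u^*.$ Because $u^*$ is built only from the released loss estimates, this is a valid adaptively chosen sequence against the mechanism.

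First I would fix the distribution so that the true loss of every classifier is exactly $1/2$ while the empirical loss can be pushed well below $1/2.$ Take $\cD$ over $X\times\bits$ with a continuous (atomless) marginal on $X$ and with the label $y$ an independent fair coin given $x.$ Then a sample $S$ has distinct points almost surely, its label part is a uniform vector $y\in\bits^n,$ and each submission $u$ corresponds to a classifier $f$ with $R_S(f)=\ell_{01}(y,u)$; this is exactly the target vector and observed loss of the attack. At the same time, for a fresh draw the label is a fair coin independent of any fixed classifier, so $R_\cD(f)=1/2$ for every $f,$ and in particular $\min_{i\le k}R_\cD(f_i)=1/2.$

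Next I would check that the Kaggle mechanism with rounding parameter $\alpha\le1/\sqrt n$ satisfies the hypothesis of \theoremref{boosting}. It releases $[R_S(f_i)]_\alpha,$ which differs from $R_S(f_i)=\ell_{01}(y,u_i)$ by at most $\alpha/2\le n^{-1/2},$ matching the assumed accuracy. Applying \theoremref{boosting}, with probability $2/3$ the boosted classifier satisfies $R_S(f_k)\le 1/2-\Omega(\sqrt{k/n}).$ Hence the minimum released estimate obeys $R_k=\min_{i\le k}[R_S(f_i)]_\alpha\le R_S(f_k)+\alpha\le 1/2-\Omega(\sqrt{k/n}),$ where the additive $\alpha\le n^{-1/2}$ is absorbed into the $\Omega(\sqrt{k/n})$ term once $k$ exceeds a constant (for constant $k$ the claimed bound is vacuous). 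Combining with $\min_{i\le k}R_\cD(f_i)=1/2$ yields leaderboard error at least $|1/2-R_k|\ge\Omega(\sqrt{k/n}),$ as claimed.

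The argument is mostly a translation, so there is no deep obstacle; the one step that genuinely needs care is the separation between empirical and true loss. I must argue that $u^*$ overfits the \emph{specific} labels present in $S$---which is what makes $R_S(f_k)$ small---while carrying no real predictive signal, so that its true loss stays at $1/2.$ The atomless marginal is what makes this clean: a fresh test point is almost surely unseen, so its fair-coin label is independent of the sample-determined vector $u^*.$ Pinning $R_\cD$ at $1/2$ in this way is precisely what converts the small empirical loss guaranteed by \theoremref{boosting} into the lower bound on leaderboard error.
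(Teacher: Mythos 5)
Your proposal is correct and follows exactly the route the paper intends: the corollary is derived as a direct consequence of \theoremref{boosting}, using the fact that the Kaggle mechanism's rounded outputs satisfy the theorem's $n^{-1/2}$-accuracy hypothesis while the true loss of every submission is pinned at $1/2$ under a fair-coin label distribution. The paper leaves this translation implicit, and your write-up supplies the missing details (the choice of $\cD$, the absorption of the rounding error, and the passage from the empirical gap to leaderboard error) correctly.
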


\subsection{Experiments with the boosting attack}

\figureref{boosting} compares the performance of the Ladder mechanism with
that of the standard Kaggle mechanism under the boosting attack. We chose
$N=12000$ as the total number of labels of which $n=4000$ labels are used for determining the
public leaderboard under either mechanism. Other parameter settings lead to a similar picture,
but these settings correspond roughly to the properties of the real data set that we will analyze later.
The Kaggle mechanism gives answers that are accurate up to a rounding error of
$10^{-5}.$ Note that $1/\sqrt{4000}\approx 0.0158$ so that the rounding error
is well below the critical level of $1/\sqrt{n}.$ The vector $y$ in the
description of our attack corresponds to the $4000$ labels used for the public
leaderboard. Since the answers given by Kaggle only depend on these labels,
the remaining labels play no role in the attack.
Importantly, the attack does
not need to know the indices of the labels used for the public leaderboard
within the entire vector of labels.

The $8000$ coordinates not used for the leaderboard remain unbiased random
bits throughout the attack as no information is revealed. In particular, the
final submission $u^*$ is completely random on those $8000$ coordinates and
only biased on the other $4000$ coordinates used for the leaderboard.
Therefore, once we evaluate the final submission $u^*$ on the test set consisting of
the remaining $8000$ coordinates, the resulting loss is close to its expected
value of $1/2,$ i.e. the expected loss of a random $0/1$-vector.
What we observe, however, is that the Kaggle mechanism
gives a strongly biased estimate of the loss of $u^*$.

The blue line in \figureref{boosting} displays the performance of the
parameter-free version of the Ladder mechanism.  Instead of
selecting all the vectors with loss at most $1/2$ we modified the attack to be
more effective against the Ladder Mechanism. Specifically, we selected all
those vectors that successfully lowered the score compared to the previous
best. As we have no information about the correlation of the remaining
vectors, there is no benefit in including them in the boosting step. 
Even with this more effective attack, the Ladder mechanism gives a result that
is correct to within the expected maximum deviation of the score on $k$ random
vectors. The intuitive reason is that every time a vector lowers the best
score seen so far, the probability of a subsequent vector crossing the
new threshold drops off by a constant factor. In particular there cannot be
more than $O(\log(k))$ such steps thus creating a bias of at
most $O(\sqrt{\log(k)/n})$ in the boosting step.

\begin{figure}[ht]
\begin{center}
\includegraphics[width=0.45\textwidth]{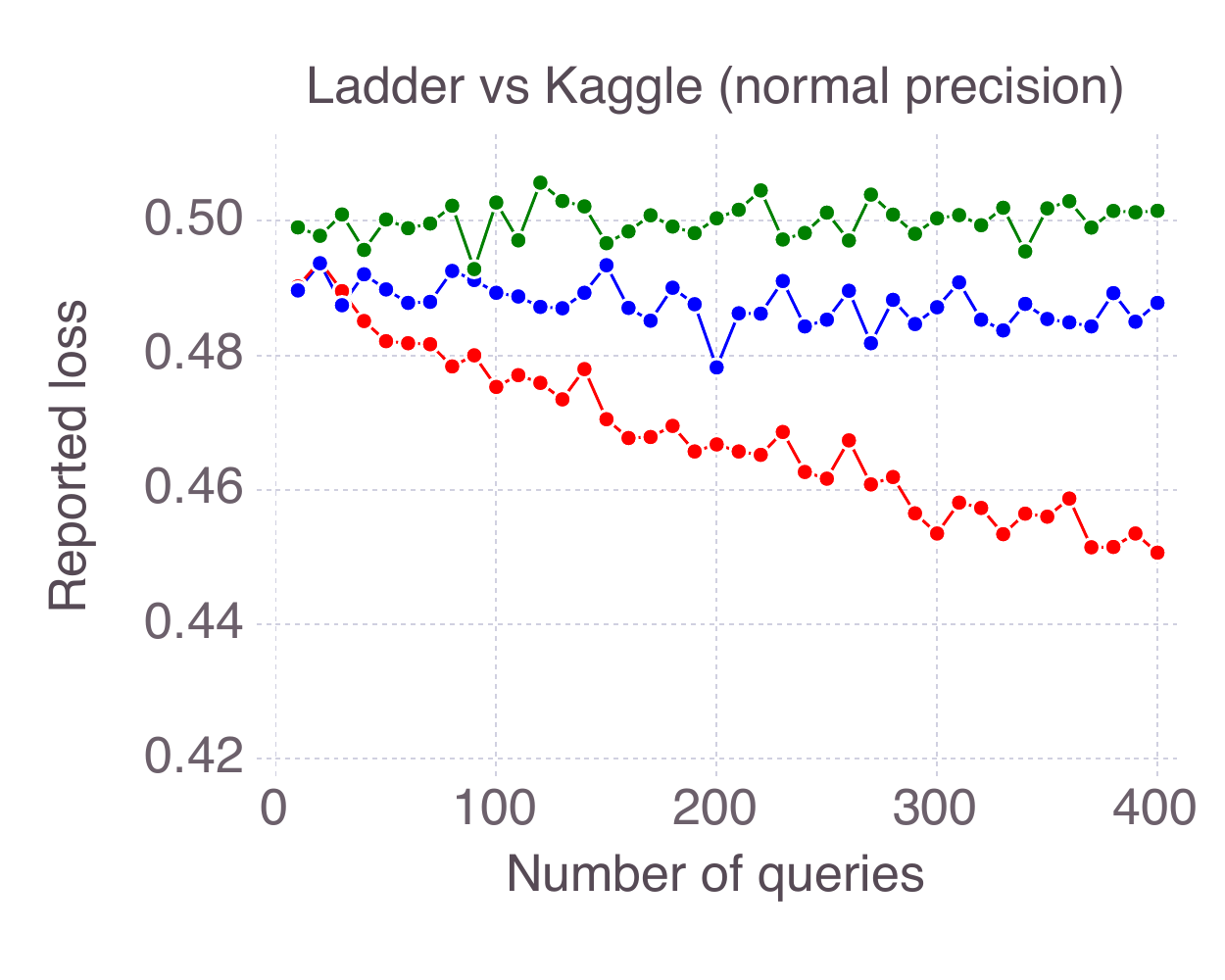}
\includegraphics[width=0.45\textwidth]{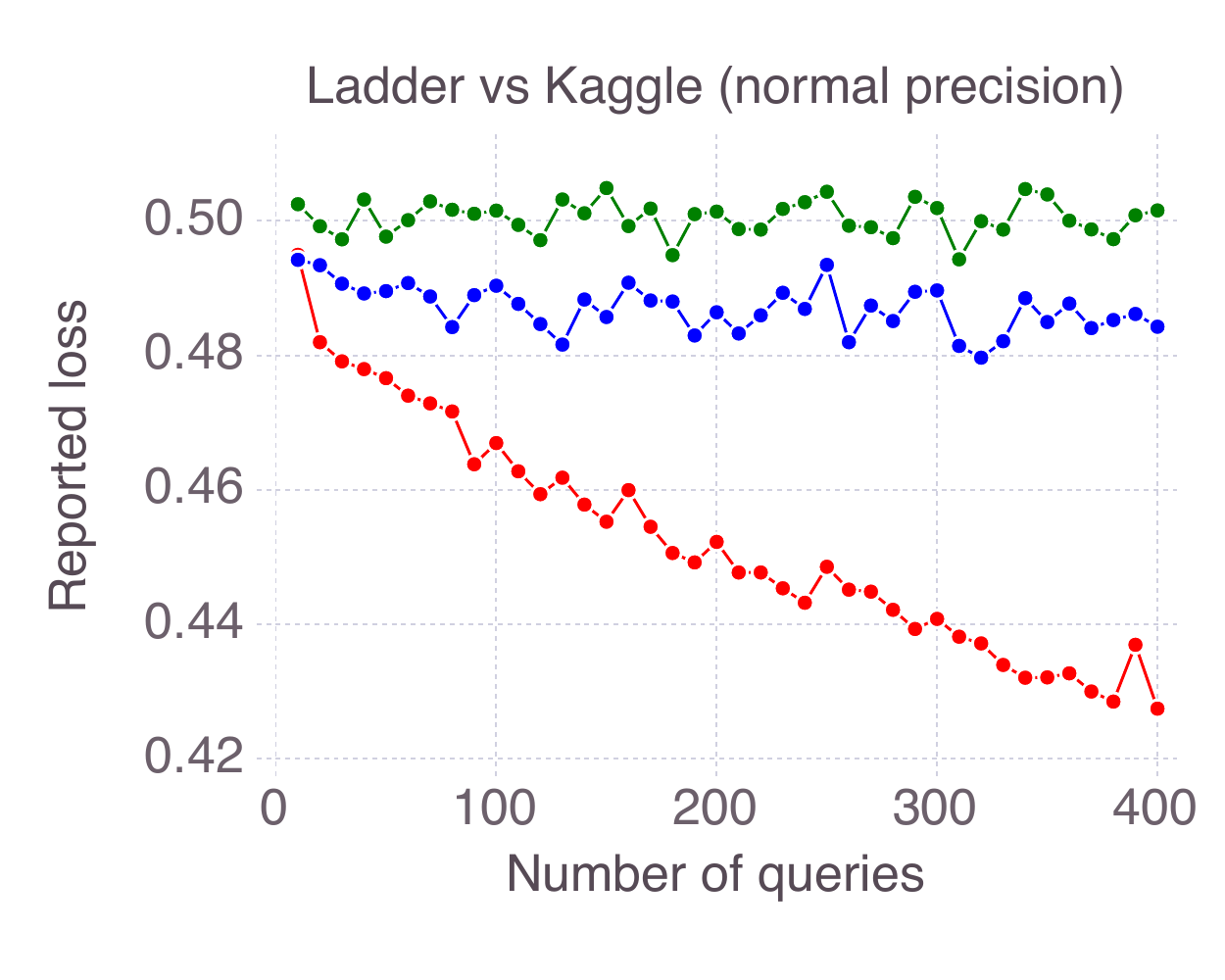}
\caption{\figurelabel{boosting} Performance of
the parameter free Ladder Mechanism compared with the Kaggle Mechanism.
Top {\bf green} line: Independent
test set. Middle {\bf blue} line: Ladder. Bottom {\bf red} line: Kaggle.
{\bf Left}: Kaggle with large rounding parameter $1/\sqrt{n}\approx 0.0158.$
{\bf Right:} Kaggle with normal rounding parameter $0.00001.$ All numbers are averaged over $5$ independent repetitions
of the experiment. Number of labels used is $n=4000.$}
\end{center}
\end{figure}

\section{Experiments on real Kaggle data}

To demonstrate the utility of the Ladder mechanism we turn to real submission
data from Kaggle's ``Photo Quality Prediction''
challenge\footnote{\url{https://www.kaggle.com/c/PhotoQualityPrediction}}.
Here is some basic information about the competition.

\begin{center}
\begin{tabular}{|l|r|}
\hline
Number of test samples&  $12000$ \\
-- used for private leaderboard&  $8400$ \\
-- used for public leaderboard&  $3600$\\
\hline
Number of submissions&  $1830$\\
-- processed successfully&  $1785$\\
\hline
Number of teams&   $200$\\
\hline
\end{tabular}
\end{center}

Our first experiment is to use the parameter-free Ladder mechanism in place of
the Kaggle mechanism across all $1785$ submissions and recompute both the
public and the private leaderboard. The resulting rankings turn out to be very close to
those computed by Kaggle. For example, \tableref{perturbations} shows the only
perturbations in the ranking among the top $10$ submissions.

\begin{table}[h]
\begin{center}
\begin{tabular}{|l|c|c|c|c|c|c|c|c|c|c|c|c|c|c|c|c|c|c|c|c|}
\hline
& \multicolumn{2}{|c|}{Private} & \multicolumn{3}{|c|}{Public} \\
\hline
Kaggle & 6 & 8 & 5 & 6 & 7\\
\hline
Ladder & 8 & 6 & 7 & 5 & 6\\
\hline
\end{tabular}
\end{center}
\caption{\tablelabel{perturbations} Perturbations in the top $10$ leaderboards}
\end{table}

\figureref{top50scores} plots the public versus private scores of the leading $50$ 
submissions (w.r.t the private leaderboard). The diagonal line indicates an
equal private and public score. The plot
a small amount of underfitting between the public and private scores.
That is, the losses on the public leaderboard generally tend to be slightly higher than
on the private leaderboard. This appears to be due random
fluctuations in the proportion of hard examples in the public holdout set. 

To assess this possibility and gain further insight into the magnitude of
statistical deviations of the scores, we randomly split the private holdout
set into two equally sized parts and recompute the leaderboards on each part.
We repeat the process $20$ times independently and look at the standard
deviations of the scores across these $20$ repetitions. \figureref{top50fresh}
shows the results demonstrating that the statistical deviations due to random
splitting are large relative to the difference in mean scores. In particular
the amount of underfitting observed on the original split is within one
standard deviation of the mean scores which cluster close to the diagonal
line. We also observed that the top $50$ scores are highly correlated so that
across different splits the points are either mostly above or mostly below the
diagonal line. This must be due to the fact that the best submissions in this
competition used related classifiers that fail to predict roughly the same
label set.

\begin{figure}[h]
\begin{center}
\includegraphics[width=0.495\textwidth]{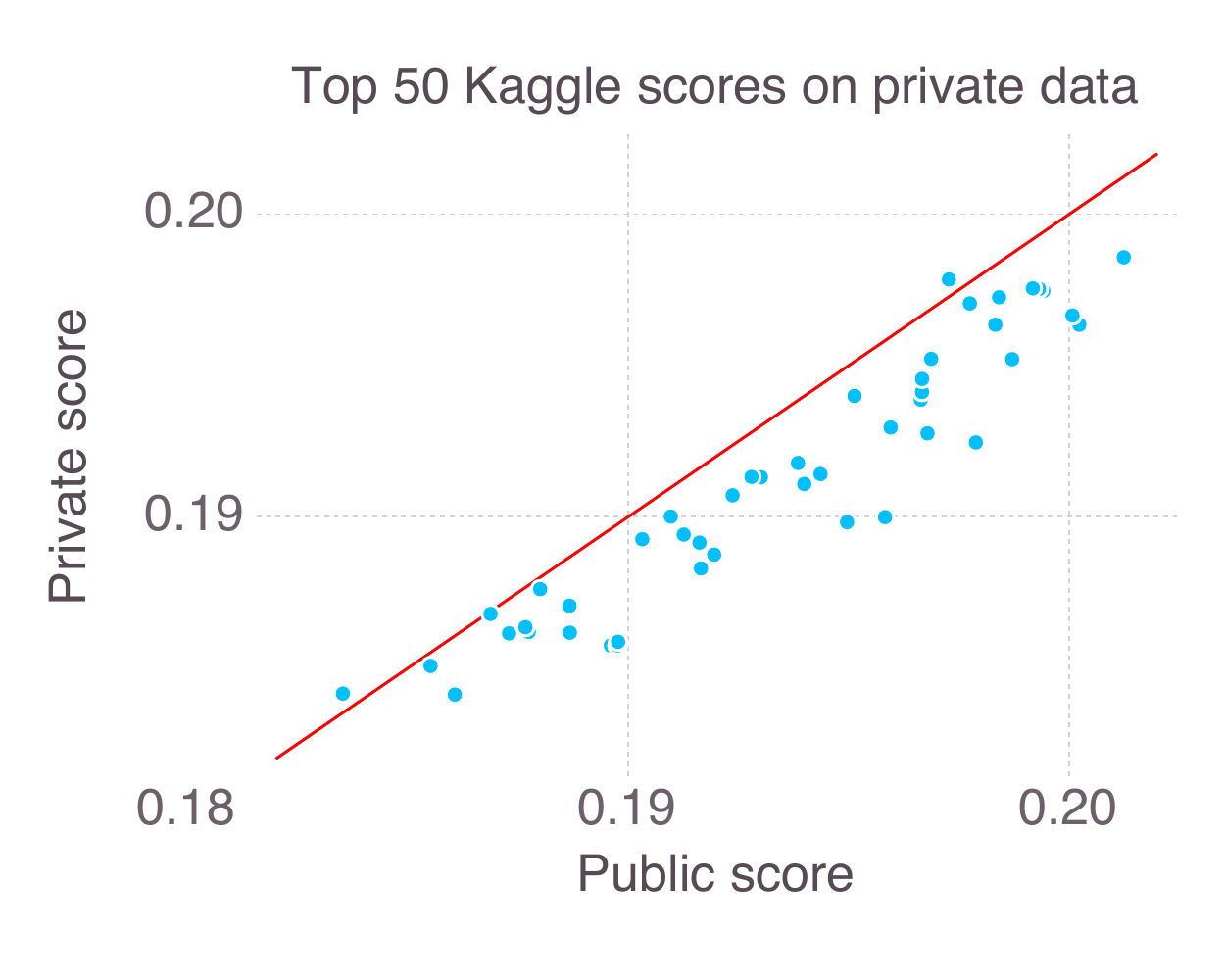}
\includegraphics[width=0.495\textwidth]{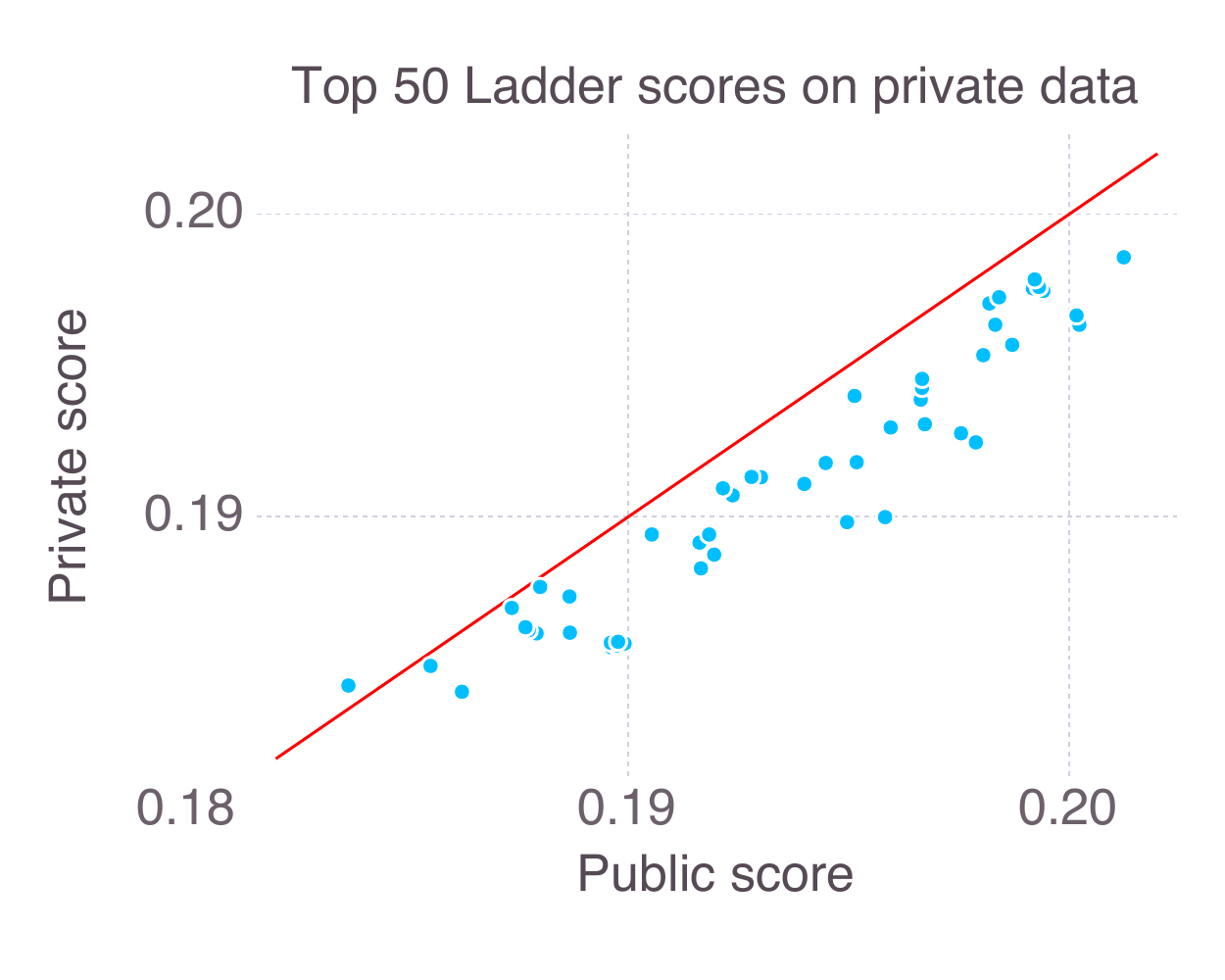}
\caption{\figurelabel{top50scores} Private versus public scores for the top
$50$ submissions. Left: Kaggle. Right: Ladder.}
\end{center}
\end{figure} 

\begin{figure}[h]
\begin{center}
\includegraphics[width=0.495\textwidth]{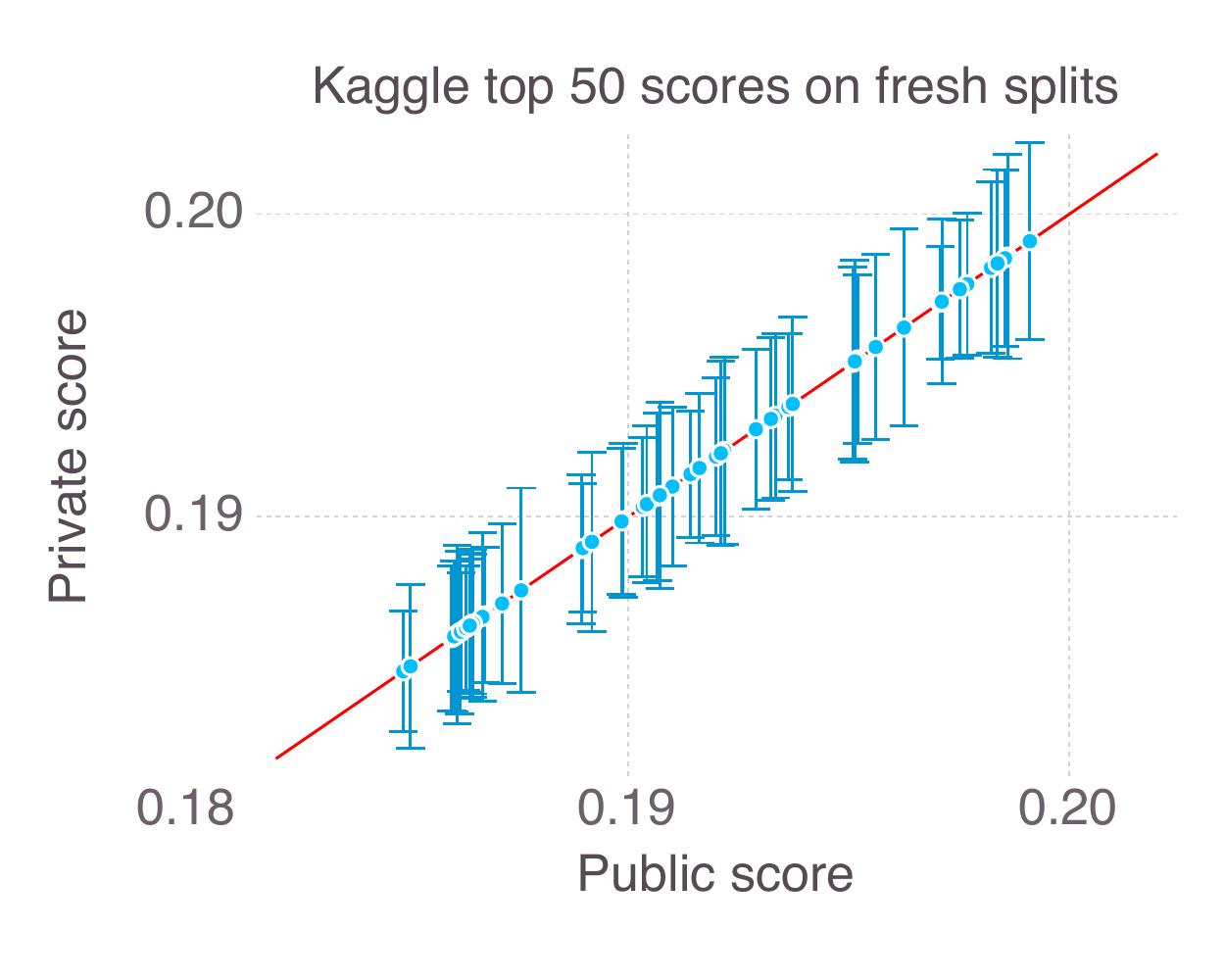}
\includegraphics[width=0.495\textwidth]{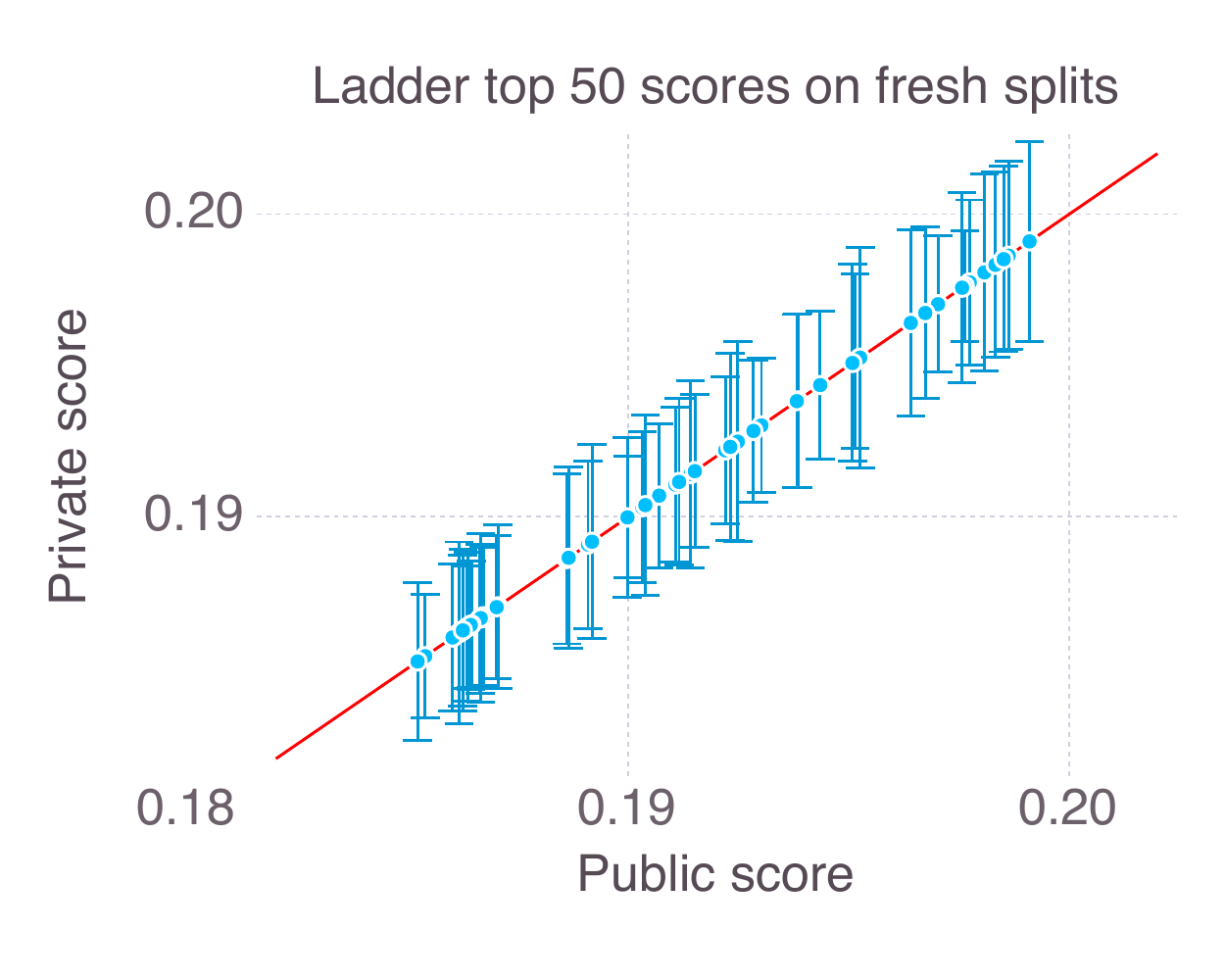}
\caption{\figurelabel{top50fresh} Fluctuations of scores across $20$
independent splits of the private data. Dots represent mean scores. Error bars
indicate a single standard deviation in each direction.}
\end{center}
\end{figure} 

\subsection{Statistical significance analysis}

To get a better sense of the statistical significance of the difference
between the scores of competing submissions we performed a sequence of
significance tests. Specifically, we considered the top $10$ submissions taken
from the Kaggle \emph{public} leaderboard and tested on the \emph{private}
data if the true score of the top submission is significantly different from
the rank~$r$ submission for $r=2,3,\dots,10.$ A suitable test of significance
is the paired $t$-test. The score of a submission is the mean of a large
number of samples in the interval $[0,2]$ and follows a sufficiently accurate
normal approximation. We chose a paired $t$-test rather than an unpaired
$t$-test, because it has far greater statistical power in our setting.  This
is primarily due to the strong correlation between competing submissions.  See
\equationref{ttest} for a definition of the test statistic. Note that the data
that determined the selection of the top $10$ classifiers is independent of
the data used to perform the significance tests.

\figureref{top10pvals} plots the resulting $p$-values before and after
correction for multiple comparisons. We see that after applying a Bonferroni
correction, the only submissions with a significantly different mean are $8$
and $9.$

\begin{figure}[h]
\begin{center}
\includegraphics[width=0.495\textwidth]{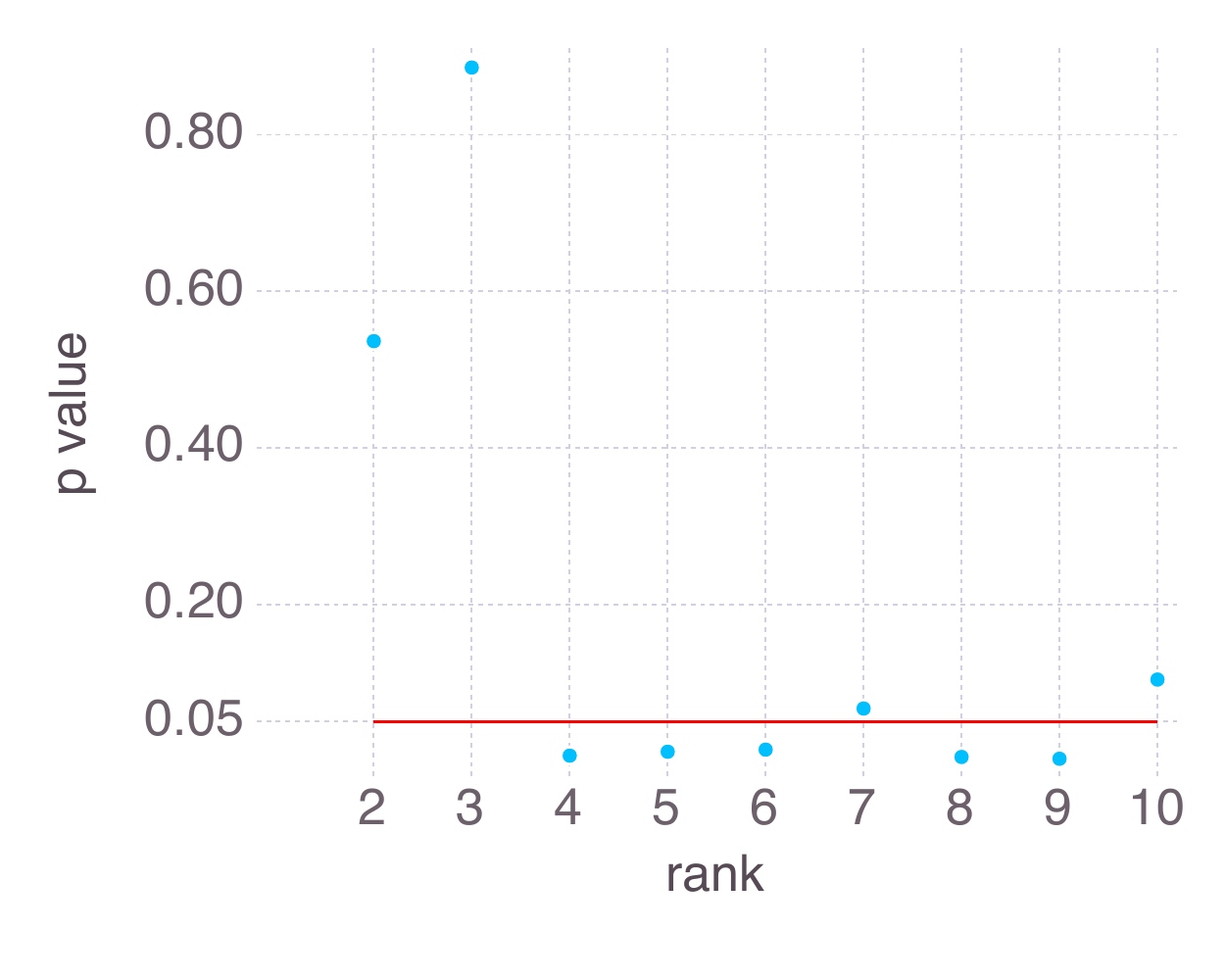}
\includegraphics[width=0.495\textwidth]{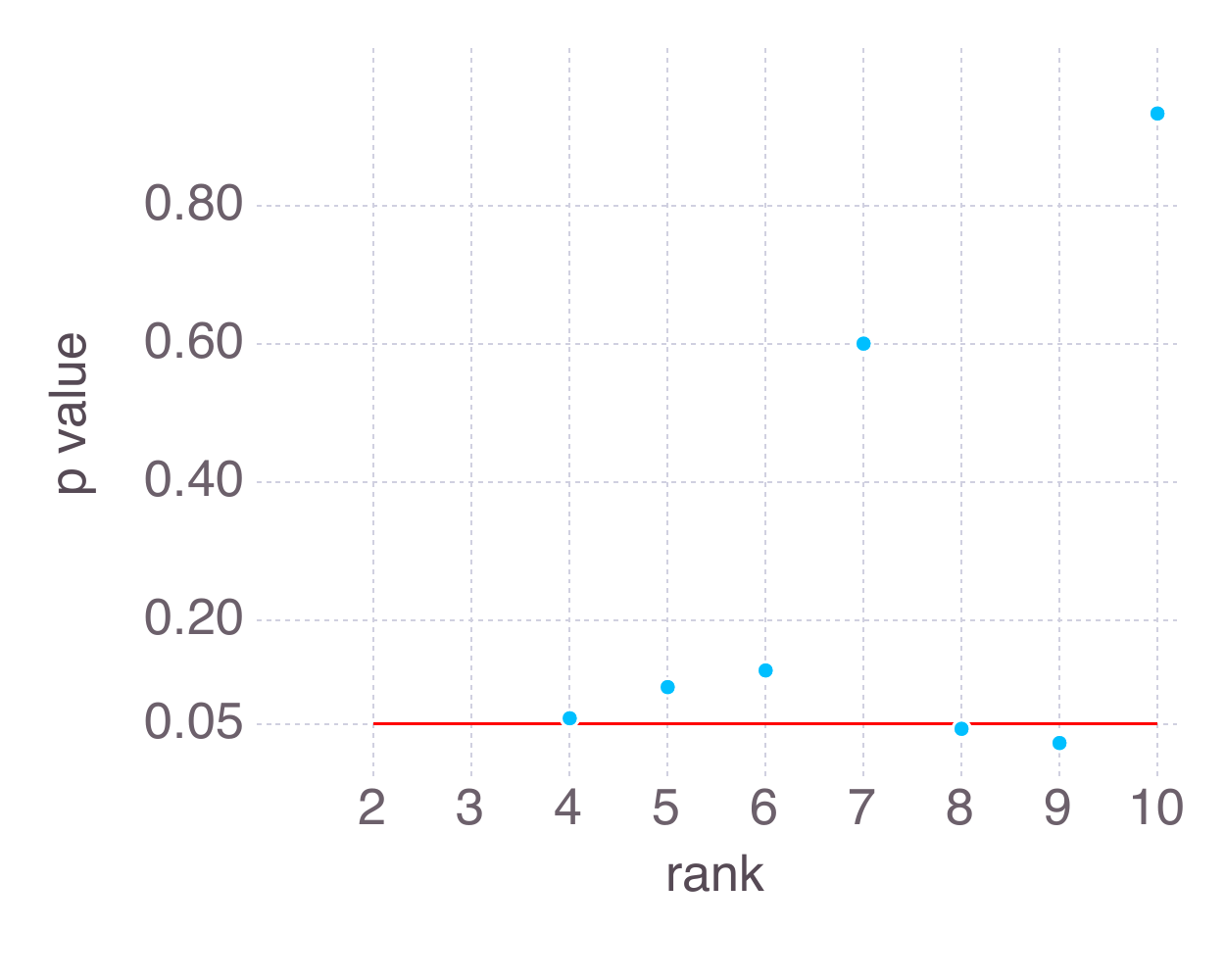}
\caption{\figurelabel{top10pvals} Significance test for the difference between
score of top submission and rank $r$ submission. Left: Before multiple
comparison correction. Right: After Bonferroni correction.}
\end{center}
\end{figure} 

These observations give further evidence that the small perturbations we saw
in the top~$10$ leaderboard between the Kaggle mechanism and the Ladder
mechanism are below the level of statistical significance.

\section{Conclusion}

We hope that the Ladder mechanism will be helpful in making machine learning
competitions more reliable. Beyond the scope of machine learning competitions, it is 
conceivable that the Ladder mechanism could be useful in other domains where
overfitting is currently a concern. For example, in the context of false
discovery in the empirical sciences~\cite{Ioannidis05,GelmanL13}, one could imagine 
using the the Ladder mechanism as a way of keeping track of scientific progress on important public data
sets.

Our algorithm can also be seen as an intuitive explanation for why overfitting
to the holdout is sometimes not a major problem even in the adaptive setting.
If indeed every analyst only uses the holdout set to test if their latest
submission is well above the previous best, then they effectively
simulate our algorithm. 

A beautiful theoretical problem is to resolve the gap between our upper and
lower bound. On the practical side, it would be interesting to use the Ladder
mechanism in a real competition. One interesting question is if the Ladder
mechanism actually encourages higher quality submissions by requiring a
certain level of statistically significant improvement over previous
submissions.

\bibliographystyle{moritz}
\bibliography{moritz}

\appendix

\section{Kaggle reference mechanism}

As we did for the Ladder Mechanism we describe the algorithm as if the analyst
was submitting classifiers $f\colon X\to Y.$ In reality the analyst only
submits a list of labels. It is easy to see that such a list of labels is
sufficient to compute the empirical loss which is all the algorithm needs to
do. The input set $S$ in the description of our algorithm corresponds to the
set of data points (and corresponding labels) that Kaggle uses for the public
leaderboard.

\begin{figure}[ht!]
\setlength{\fboxsep}{2mm}
\begin{center}
\begin{boxedminipage}{\textwidth}

\noindent {\bf Input:} Data set $S,$ rounding parameter~$\alpha>0$ (typically $0.00001$)

\noindent {\bf Algorithm:}

\begin{itm}
\item {\bf For each} round $t \leftarrow 1,2 \ldots, k:$
\begin{enum}
\item Receive function $f_t\colon X\to Y$
\item {\bf Output} $[R_S(f_t)]_\alpha.$
\end{enum}
\end{itm}
\end{boxedminipage}
\end{center}
\vspace{-3mm}
\caption{\figurelabel{kaggle} Kaggle reference mechanism. We use the notation
$[x]_\alpha$ to denote the number $x$ rounded to the nearest integer multiple of
$\alpha$.}
\end{figure}

\end{document}